\gdef\@copyrightpermission{
  \begin{minipage}{0.2\columnwidth}
   \href{https://creativecommons.org/licenses/by/4.0/}{\includegraphics[width=0.90\textwidth]{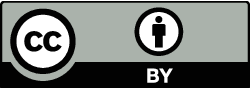}}
  \end{minipage}\hfill
  \begin{minipage}{0.8\columnwidth}
   \href{https://creativecommons.org/licenses/by/4.0/}{This work is licensed under a Creative Commons Attribution International 4.0 License.}
  \end{minipage}
  \vspace{5pt}
}
\pgfplotsset{compat=1.17}
\NewDocumentCommand{\Reals}{}{\mathbb{R}}
\NewDocumentCommand{\Nats}{}{\mathbb{N}}
\theoremstyle{plain}
\newtheorem{theorem}{Theorem}[section]
\theoremstyle{definition}
\newtheorem{definition}[theorem]{Definition}
\theoremstyle{remark}
\newtheorem*{remark*}{Remark}
\newtheorem*{setup*}{Setting}
\newcommandx{\rephrase}[2][1=]{\todo[linecolor=black,backgroundcolor=black!25,bordercolor=black,#1]{OR: #2}}
\newcommandx{\ltt}[2][1=]{\todo[linecolor=blue,backgroundcolor=blue!25,bordercolor=blue,#1]{\textbf{LTT:} #2}}
\title[Market-based AI]{Market-based Architectures in RL and Beyond}
\author{Abhimanyu Pallavi Sudhir}
\affiliation{
  \institution{University of Warwick, UK}
  \city{}
  \country{}
  }
\email{abhimanyu.pallavi-sudhir@warwick.ac.uk}
\author{Long Tran-Thanh}
\affiliation{
  \institution{University of Warwick, UK}
  \city{}
  \country{}
  }
\email{long.tran-thanh@warwick.ac.uk}
\begin{abstract}

Market-based agents refer to reinforcement learning agents which determine their actions based on an internal market of sub-agents. We introduce a new type of market-based algorithm where the state itself is factored into several axes called ``goods'', which allows for greater specialization and parallelism than existing market-based RL algorithms. Furthermore, we argue that market-based algorithms have the potential to address many current challenges in AI, such as \emph{search}\todo{idk if it sounds weird to say this}, \emph{dynamic scaling} and \emph{complete feedback}, and demonstrate that they may be seen to generalize neural networks; finally, we list some novel ways that market algorithms may be applied in conjunction with Large Language Models for immediate practical applicability.
    
\end{abstract}
\keywords{markets; prediction markets; alignment; RL}
\begin{document}

\pagestyle{fancy}
\fancyhead{}
\maketitle

\section{Introduction}

Before neural networks won the mandate of heaven, an AI research paradigm that had shown considerable promise was that of \emph{market-based architectures}, i.e. AI agents that determine their output based on some internal market-based mechanism \cite{kweeMarketBasedReinforcementLearning2001, changDecentralizedReinforcementLearning2020}.
\ltt{refs}

There are general intuitive arguments that motivate such a line of research. Philosophers and psychologists have long pondered multi-agent models of the mind \cite{minskySocietyMind1988}; moreover, one may imagine that ``any'' machine learning task could in principle be solved by a market of agents solving sub-tasks with their individual reward set by the sale value of their output. There is work suggesting that markets can capture some notion of \emph{bounded rationality}, e.g. the \emph{Boundedly Rational Inductive Agent} (BRIA) \cite{oesterheldTheoryBoundedInductive2023} and \emph{Algorithmic Bayesian Epistemology} \cite{neymanAlgorithmicBayesianEpistemology2024}. In some sense, markets ``aggregate'' the intelligence or capacities of their individual participants.\footnote{See e.g. Hayek on the role of markets in aggregating information \cite{hayekEconomicsKnowledge1937} -- or the famous parable ``I, Pencil'' \cite{leonardereadPencilMyFamily1958}: ``... no one person, no matter how smart, could create from scratch a small, everyday pencil [yet the market makes over a billion of them each year] ...''. There is also some empirical work on emergent intelligent behaviour in markets comprised of zero-intelligence traders \cite{godeAllocativeEfficiencyMarkets1993, schwartzHowMuchIrrationality2008, jamalSimpleAgentsIntelligent2015}}\todo{see commented-out stuff for potential extra stuff to add}

``Market-based architectures'' can be made concrete in the case of reinforcement learning (RL), where the majority of work in this area lies (see \ref{sec:related} for a brief summary). For example in the \emph{Hayek machine} \cite{baumModelIntelligenceEconomy1999} and its derivatives, the setting is a Markov Decision Problem (MDP) and there is a single resource, the ``right to act and collect reward'', that is traded between sub-agents. At each time step, this resource is sold to the highest-bidding sub-agent, who performs some action that modifies the state and collects reward.%

In this paper, we argue that market-based agents represent an underexplored and promising niche, \emph{especially} in context of recent advancements in language models (LLMs), and have potential to address a range of present challenges in contemporary AI research. Specifically, we make the following claims and contributions:

\textbf{Theoretical framework for market-based agents.} We present two general frameworks for market-based RL agents: (1) the ``deep market'' (Def~\ref{def:deep}), where a single good, the \emph{state}, is passed through a sequence of transacting agents, and (2) the ``wide market'' (Def~\ref{def:goods}), in which the state space itself is partitioned into factors called \emph{goods}. The deep framework is not much of a departure from existing algorithms, and can be applied to any Partially Observed Markov Decision Process (POMDP); to our knowledge the wide framework is original to us, and is a generalization of the deep framework which better mirrors the success of real-world markets allowing for greater specialization and parallelism. A Python library for creating market-based algorithms will be released upon publication.

\textbf{Markets, neural networks and backpropagation.} We demonstrate that these market-based agents can in principle be applied even to basic supervised learning tasks such as classification, and that neural networks (though not backpropagation or gradient descent) emerge as a special case of them. Furthermore we generalize the result in \cite{wentworthCompetitiveMarketsDistributed2018} to wide markets, demonstrating a suggestive relationship between backpropagation and markets at equilibrium.

\textbf{Search, complete feedback and alignment.} We claim that markets can address several present problems in AI research, specifically: they are a natural framework for \emph{search}, their scale or depth can be \emph{dynamic} rather than fixed, and they allow \emph{complete feedback} \cite{demskiCompleteFeedback2024}, a property widely regarded as valuable in AI alignment.

\textbf{Markets and LLMs.} We present novel ways in which market algorithms might be applied in conjunction with LLMs to address their limitations: they can be used for developing reasoning models like \texttt{o1}, and LLMs can facilitate ``information markets'' that can in turn improve human feedback mechanisms in AI training.

\subsection{Related Work}
\label{sec:related}

\textbf{Market-based RL.} The majority of early work in this area has focused on market-based \emph{reinforcement learning} (RL) algorithms. The pioneering work in this domain consists of Holland's \emph{Learning Classifier Systems} or ``bucket brigade'' \cite{hollandPropertiesBucketBrigade1985a} in rule-based systems, where condition-action agents (``classifiers'') bid to post messages (which can be actions described in some language) onto a global message board. Improvements to this paradigm were made by Schmidhuber \cite{schmidhuberEvolutionaryPrinciplesSelfreferential1987, schmidhuberLocalLearningAlgorithm1989} who allowed agents to determine their own bids and imposed credit conservation, and by Baum \cite{baumModelIntelligenceEconomy1999} who further strictly enforced property rights, resulting in a much more familiar set-up called the ``Hayek Machine''. The Hayek machine, which can be applied to any Markov Decision Process (MDP), was subsequently extended to POMDPs by \cite{kweeMarketBasedReinforcementLearning2001} by adding external memory, and more recently \cite{changDecentralizedReinforcementLearning2020} modified the framework to use Vickrey auctions and prove that a Nash equilibrium of the market produces a globally optimal policy.\todo{is this much description necessary?}

\textbf{Bounded rationality and markets.} Other, more recent work in this area includes: \emph{Logical Induction} \cite{garrabrantLogicalInduction2020}, an algorithm that assigns probabilities to mathematical sentences based on their prices in a prediction market that (roughly speaking) pays off when a sentence is proven; and the \emph{Boundedly Rational Inductive Agent} (BRIA) \cite{oesterheldTheoryBoundedInductive2023} which solves finite decision problems by assigning it to the highest-bidding trader, similar to in market-based RL. The key insight of these works is that markets are useful for modeling \emph{boundedly rational agents}. The main results of each work -- the fact that the logical inductor cannot by dominated by any polynomial-time trader, and the ``boundedly rational inductive agent criterion'' in the latter paper -- are specific and precise formulations of the Efficient Market Hypothesis \cite{neymanAlgorithmicBayesianEpistemology2024}.\todo{remove last sentence?}

\textbf{Markets and neural networks.} A specific equivalence between classifier systems and neural networks has been studied in the \emph{Neural bucket brigade} \cite{davisMappingClassifierSystems1988, schmidhuberLocalLearningAlgorithm1989}, although this does not consider backpropagation. A suggestive analogy between markets and backpropagation is discussed in \cite{wentworthCompetitiveMarketsDistributed2018}, though only for the case of a strictly sequential, unit-width market like that in Def~\ref{def:deep}. In our work we make this more precise and generalize it to \ref{def:goods}\todo{maybe mention Agoric Computation, BitTensor and market-based scaffolding for LLMs? see comments}

\section{Market algorithms}

\NewDocumentCommand{\statevecs}{}{\mathcal{S}}
\NewDocumentCommand{\statevec}{}{s}
\NewDocumentCommand{\actions}{}{\mathcal{X}}
\NewDocumentCommand{\action}{}{x}
\NewDocumentCommand{\trans}{}{\mathbf{P}}
\NewDocumentCommand{\reward}{}{\mathbf{R}}
\NewDocumentCommand{\obss}{}{\Omega}
\NewDocumentCommand{\obs}{}{\omega}
\NewDocumentCommand{\obsd}{}{\mathbf{O}}
\NewDocumentCommand{\agents}{}{\mathcal{A}}
\NewDocumentCommand{\agent}{}{\alpha}
\NewDocumentCommand{\agentact}{}{\hat{\agent}}
\NewDocumentCommand{\agentbid}{}{{\agent}_b}
\NewDocumentCommand{\consumer}{}{\gamma}
\NewDocumentCommand{\consumers}{}{\mathcal{C}}
\NewDocumentCommand{\consumeract}{}{\hat{\consumer}}
\NewDocumentCommand{\consumerbid}{}{{\consumer}_b}
\NewDocumentCommand{\messages}{}{\mathrm{str}}
\NewDocumentCommand{\wealth}{}{w}
\NewDocumentCommand{\bids}{}{b}
\algnewcommand{\LeftComment}[1]{\(\triangleright\) #1}

\begin{setup*}[POMDP]
We assume a ususal POMDP setting, with a state space $\statevecs$, action space $\actions$, transition probability $\trans(\statevec'\mid\statevec,\action)$ (which allows us to treat actions as stochastic functions i.e. $\action(\statevec)\sim\trans(\statevec'\mid\statevec,\action)$), reward function $\reward(\statevec,\action,\statevec')$, and observation distribution $\obs(\statevec)\sim\obsd(\obs\mid\statevec)$ over a set of observations $\obss$. A policy is a map $\agent:\obss\to\actions$, and the process proceeds as per usual.
\label{su:pomdp}
\end{setup*}

Mirroring \cite{kweeMarketBasedReinforcementLearning2001} and similar to Belief-MDP formulations, we can extend the state and message spaces by taking the cartesian product space with a message space $\messages$ which is always preserved by $\obs$; this gives the policy a ``memory'', or in terms of markets, creates informational goods. %

The first algorithm we describe is Def~\ref{def:deep}: here, agents bid at each time step for the \emph{right to act and collect reward}, and the highest-bidding agent is chosen to act. As in previous work, e.g. \cite{baumModelIntelligenceEconomy1999, changDecentralizedReinforcementLearning2020}, these agents are not utility-maximizers but programs out of a possibly infinite collection of agents $\agents$ (which we leave abstract). The parameters of this algorithm are the wealths of each agent $\wealth[\agent]$. trained by the training loop \textsc{Capitalism}: at each step, the agent pays its bid to the previous agent, collects the reward generated by its actions and receives the bid of the next. This means that at equilibrium, each agent is incentivized to bid the value function, and perform the action with maximum Q-value.\todo{Is this ok for an explanation? idk if I should remove the last sentence as I'm not explicitly proving it}

\begin{definition}[Deep market]
Assume a POMDP setup, and let $\agents$ be a collection of ``agents'', which are (stochastic) maps $\agent:\obss\to\actions\times\Reals$. The first component $\agentact:\obss\to\actions$ of an agent is called its \emph{action}, the second component $\agentbid:\obss\to\Reals$ is called its \emph{bid}. The market algorithm then proceeds as in Algorithm~\ref{alg:deep}.\ltt{briefly explain Alg 1 in layman's language}\footnote{For training, this may be executed in multiple episodes with different initial state $\statevec$, either with finite episodes or in parallel (with shared wealth variables across running instances).}

\begin{algorithm}[tb]
\caption{Deep market}
\label{alg:deep}
\begin{algorithmic}

\Procedure{Market}{}\Comment{Forward pass}
\State \textbf{parameters: } $\wealth [\agent]\in\Reals$ \Comment{wealths of each $\agent\in\agents$}
\State \textbf{input: $\obs\in\obss$}
\State $\bids[\agent]\gets\min(\agentbid(\obs),\wealth[\agent])$ for $\agent\in\agents$\Comment{Cap bids by wealth}
\State $\agent^*\gets\arg\max b[\alpha]$\Comment{Choose winning agent}
\State $\action\gets \agentact^*(\obs)$\Comment{Determine action}
\State \Return $\action$, $\agent^*$
\EndProcedure

\Procedure{Capitalism}{}\Comment{Training loop}
\State Initialize agent wealths $\wealth [\agent]\in\Reals$ for each $\agent\in\agents$
\State Initialize original owner of the world $\agent^*$
\State Initialize state $\statevec\in\statevecs$
\While{$t\in\Nats$}
\State $\obs\gets\obs(\statevec)$\Comment{Generate observation}
\State $\agent^*_{\text{prev}}\gets\agent^*$
\State $\action, \agent^*\gets \Call{Market}{\obs}$
\State $\wealth[\agent^*]\gets\wealth[\agent^*]-\bids[\agent^*]$\Comment{Pay bid}
\State $\wealth[\agent^*_{\text{prev}}]\gets\wealth[\agent^*_{\text{prev}}]+\bids[\agent^*]$\Comment{to previous owner}
\State $\statevec_{\text{prev}}\gets\statevec$
\State $\statevec\gets\action(\statevec)$\Comment{Transition state}
\State $\wealth[\agent^*]\gets\wealth[\agent^*]+\reward(\statevec_{\text{prev}},\action,\statevec)$\Comment{add reward to wealth}
\EndWhile
\EndProcedure
\end{algorithmic}
\end{algorithm}

\label{def:deep}
\end{definition}

Some details have been ignored. $\agents$ will usually be infinite and so tables like $\wealth[\agent]$ cannot simply be indexed on it: instead, $\agents$ must be countably enumerated and added to the economy one-by-one in the training loop with each agent being endowed with some allowance. To prevent holdout problems, one may impose a small fixed ``rent'' on $\agent^*$ at each training step i.e. $\wealth[\agent^*]\gets(1-\varepsilon)\wealth[\agent^*]$. A simple first-price auction is shown for simplicity, and may be replaced with a Vickrey auction in line with \cite{changDecentralizedReinforcementLearning2020}. One may also replace the explicit reward function $\reward$ with a class of ``consumers'' $\consumers$ who place bids upon desirable states, which may be a useful formulation for reinforcement learning from diverse human feedback\footnote{see \cite{conitzerPositionSocialChoice2024, geAxiomsAIAlignment2024a} for a primer on this area}.\todo{might instead move this to the markets and LLMs section}

Definition~\ref{def:deep}, which subsumes existing market-based RL, already illustrates one of the key defining features of markets recognizable to any student of economics: markets serve not only to \emph{select} (via market competition) the best process to achieve a task, but also to \emph{distribute} a complex task among agents which are individually much too weak or uninformed to complete the entire task. This means that the collection of actions $\agents$ can be a class of ``simple'' agents, so that enumerating $\agents$ can quickly find many valuable agents. %

Specifically, Def~\ref{def:deep} exploits modularity of \emph{action}, where the state can be transformed one step at a time. There is however another form of modularity, missed by all existing market-based RL algorithms, which we may call modularity of \emph{state}, and is crucial to the success of real-world markets:  here, agents are not constantly transacting the whole ``state of the world'': instead, the state of the world is decomposed into several components, called \emph{goods}\footnote{$\oplus$ denotes the direct sum of vector spaces, which is a Cartesian product equipped with a pointwise vector addition operator}: $\statevecs=\statevecs_1\oplus\dots\oplus\statevecs_n$. For instance, $\statevec_1\in\statevecs_1$ might represent the quantity of iron ore in the world. Agents bid for small quantities of each good; no agent owns the whole world, and does not have to bother performing a valuation of the whole world. The ``state of the world'' may be recovered as the vector sum of all agents' holdings.%

\NewDocumentCommand{\equilibrium}{}{\operatorname{Equ}}
\NewDocumentCommand{\price}{}{\mathbf{p}}

At least two new difficulties are introduced by considering markets of multiple divisible goods:

\textbf{General equilibrium theory.} Allocating goods is no longer as easy as an auction, because agents might have joint demand schedules for goods that are complementary or substitute to each other. The problem of matching buyers and sellers in this setting is the domain of ``General Equilibrium Theory'' in economics, where there are models such as the Fisher market and the Arrow-Debreu exchange market \cite{arrowExistenceEquilibriumCompetitive1954, mckenzieExistenceGeneralEquilibrium1959}. Computing the equilibrium in these models is non-trivial and often intractable \cite{chenSettlingComplexityArrowDebreu2009, chenSpendingNotEasier2009a}

\textbf{Property rights in POMDPs.} A more subtle difficulty lies in the fact that we want to divide the state $\statevec\in\statevecs$, which is not directly observed, among bidding agents (so that each of their actions only transform their respective portions of the state, i.e. their properties), but the agents only submit demand schedules over $\obs\in\obss$. It is not obvious how to map a decomposition of a vector $\obs(\statevec)$ back onto $\statevec$.

Both of these have to do with specific questions of how buyers and sellers meet and match in real markets, i.e. having to do with \emph{institutions} such as property rights and mechanism design. These questions are out of scope for us, and we abstract them away by postulating some effective equilibrium computation algorithm\footnote{e.g. there are results demonstrating that simple tâtonnement converges to a Walrasian equilibrium when the agents' valuations are gross subtitutes \cite{GUL199995}.} $\equilibrium(\obs,\agentbid^1,\dots\agentbid^m)=(\price, \obs[\agent^1],\dots\obs[\agent^m])$ i.e. which takes the total perceived quantity of goods in the world $\obss$ and each agent's valuation function $\agentbid^i:\obss\to\Reals$, and returns a price vector $\price\in\obss$ and allocations to each agent $\obs[\agent^i]\in\obss$, such that (in line with a Walrasian equilibrium with quasilinear utilities \cite{millerNotesMicroeconomicTheory2006}):
\begin{itemize}
    \item $\obs=\sum\obs[\agent^i]$ (the full quantity is allocated)
    \item $\price\cdot\obs[\agent^i]\le\agentbid^i(\obs[\agent^i])$ for all $\agent^i$ (no agent pays for what it doesn't value), and 
    \item $\obs[\agent^i]=\arg\max_{\obs'\in\obss}\agentbid^i(\obs')-\price\cdot\obs'$ for all $\agent^i$ (each agent gets a utility-maximizing bundle at the given price).
\end{itemize}

\begin{definition}[Wide market]
  Everything from the POMDP setup and the agent type in Def~\ref{def:deep} remains the same; except that $\statevecs$ and $\obss$ are now vector spaces with each vector called a \emph{goods bundle}. Further, we have action spaces $\actions_\obs$ indexed by $\obs\in\obss$ such that (1) for any $\action\in\actions_\obs$, there is an ``exercised property right'' denoted $\statevec_\action(\obs)\in\statevecs$ such that $\obs(\statevec_\action(\obs))=\obs$ and $\action(\statevec)=\action(\statevec_\action(\obs))+(\statevec-\statevec_\action(\obs))$ (i.e. each agent's actions transform only the goods they own) and (2) there is an injective map $\xi:\actions_{\obs_1}\times\actions_{\obs_2}\to\actions_{\obs_1+\obs_2}$ such that $\xi(\action_{\obs_1}, \action_{\obs_2})=\action_{\obs_1}(\statevec_{\action_{\obs_1}}(\obs_1))+\action_{\obs_2}(\statevec_{\action_{\obs_2}}(\obs_2))+(\statevec-\statevec_{\action_{\obs_1}}(\obs_1)-\statevec_{\action_{\obs_2}}(\obs_2))$ (this is used to combine actions by different agents). The agents now have dependent type signatures $\agent:(\obs:\obss)\to\actions_\obs\times\Reals$, and the transition probability $\action(\statevec)\sim\trans(\statevec'\mid\statevec,\action)$, reward function $\reward(\statevec,\action,\statevec')$ and observation distribution $\obsd(\obs\mid\statevec)$ are now interpreted as applying to ``private property'', i.e. to any goods bundle in their respective domains, rather than to the whole state, e.g. each action $\action(\statevec)$ defines a \emph{production function} that transforms one goods bundle into another, and $\agentbid$ is an agent's \emph{valuation function} over all possible bundles, i.e. how much it is willing to pay for a particular perceived bundle (if it's differentiable, then $\nabla\agentbid(\obs)$ can be interpreted as the price vector it offers). The market algorithm proceeds as in Algorithm~\ref{alg:goods} \ltt{also explain Alg 2}.

\begin{algorithm}[tb]
\caption{Wide market}
\label{alg:goods}
\begin{algorithmic}

\Procedure{Market}{}\Comment{Forward pass}
\State \textbf{parameters: } $\wealth [\agent]\in\Reals$ \Comment{wealths of each $\agent\in\agents$}
\State \textbf{input: $\obs\in\obss$}
\State\LeftComment{Cap bids by budget}
\State$\bids[\agent]\gets\lambda\statevec:\min(\agentbid(s),\wealth[\agent])$ for all $\agent\in\agents$
\State\LeftComment{Compute equilibrium prices and allocations}
\State $\price,\obs'[\agent^1],\dots\obs'[\agent^n]\gets\equilibrium(\sum\obs[\agent], \bids[\dots])$
\State $\action[\agent]\gets\agentact(\obs'[\agent])$ for all $\agent\in\agents$\Comment{Determine actions}
\State \Return $\action[\agent^1], \dots\action[\agent^n], \price,\obs'[\agent^1],\dots\obs'[\agent^n]$
\EndProcedure

\Procedure{Capitalism}{}\Comment{Training loop}
\State Initialize agent wealths $\wealth [\agent]\in\Reals$ for each $\agent\in\agents$
\State Initialize agent properties $\obs[\agent]\in\obss$ for each $\agent\in\agents$
\State Initialize state $\statevec\in\statevecs$
\While{$t\in\Nats$}
\State $\obs\gets\obs(\statevec)$\Comment{Generate observation}

\State $\dots\gets\Call{Market}{\obs}$\Comment{get all outputs}

\State $\wealth[\agent]\gets\wealth[\agent]-\price\cdot\obs'[\agent]$ for all $\agent$\Comment{Charge buyers}
\State $\wealth[\agent]\gets\wealth[\agent]+\price\cdot\obs[\agent]$ for all $\agent$\Comment{Pay sellers}

\State $\statevec[\agent]\gets\statevec_{\action[\agent]}(\obs[\agent])$ for all $\agent$\Comment{Calculate property rights}
\State $\statevec'[\agent]\gets\action[\agent](\statevec[\agent])$ for all $\agent$\Comment{Transform goods}
\State $\wealth[\agent]\gets\wealth[\agent]+\reward(\statevec[\agent], \action[\agent], \statevec'[\agent])$ for all $\agent$
\State $\statevec\gets\sum\statevec'[\agent]$ \Comment{Update state}
\EndWhile
\EndProcedure

\end{algorithmic}
\end{algorithm}

\label{def:goods}
\end{definition}
\todo{is the preceeding explanation not enough? Otherwise I can add: [see following comment in tex source]}
\textbf{Computing prices via backpropagation}. Though it remains to be seen how standard RL problems might be cast in this setting, we expect implementations of this algorithm to be much more effective than of Def~\ref{def:deep}, as it allows us to use simpler and more specialized agents in the collection $\agents$. In particular, these agents do not need to estimate the valuations of the whole world, but only of their particular input goods.

This last point can be illustrated particularly nicely when the setup is an MDP, and rewards are replaced by consumers -- here, $\obss=\statevecs$ and $\obs(\statevec)=\statevec$, so $\agentact:\statevecs\to\actions$ can directly be interpreted as a production function $\agentact:\statevecs\to\statevecs:=\agentact(\statevec)(\statevec)$. Then if the agent can estimate what the market prices of its output goods will be (e.g. if prices are sufficiently stable that it makes sense to speak of a ``prevailing price'' $\price$), then it can compute its offered prices via the chain rule -- where $D\agentact$ denotes the Jacobian:

\begin{equation}
    \nabla\agentbid=D\agentact\cdot\price
    \label{eq:backprop}
\end{equation}

i.e. once the market ``graph'' is fixed, prices can be computed by simply backpropagating consumer bids through the graph. This generalizes the result in \cite{wentworthCompetitiveMarketsDistributed2018}, which demonstrated this relationship for deep markets only.

\section{Motivation for market-based AI}

\NewDocumentCommand{\nnX}{}{\mathbb{X}}
\NewDocumentCommand{\nnY}{}{\mathbb{Y}}

In this section, we describe how markets could potentially generalize neural networks and provide a more ``flexible training mechanism''. Although we have presented our algorithms in an RL setting, they can even be applied to supervised learning tasks by treating internal representations as ``states''. To see this, it is illustrative to see how a simple neural network can be recast as a market.

\begin{theorem}[Neural networks as markets]
  Consider a fully-connected neural network $f:\nnX\to\nnY:=f_n\circ\dots f_1$ where each $f_i:\Reals^{m_{i-1}}\to\Reals^{m_i}$ is a layer, i.e. a function of the form $f_i(\mathbf{x})=\sigma(W_i\mathbf{x}+\mathbf{b}_i)$ where $\sigma$ is a ReLU activation. Then there is a deep market whose forward pass performs the same operation as $f$.
  \label{thm:nn}
\end{theorem}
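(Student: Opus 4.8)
The plan is to realize each layer $f_i$ as the \emph{action} of one dedicated agent and to use the auction to fire these $n$ agents in the order $1,2,\dots,n$, so that running the market for $n$ transactions reproduces $f=f_n\circ\dots\circ f_1$. First I would use the memory/message extension described above to augment the state with a \emph{phase counter}: take $\statevecs=\Reals^{M}\times\{0,1,\dots,n\}$ (where $M=\max_i m_i$, with shorter activation vectors zero-padded) and set $\obss=\statevecs$, $\obs(\statevec)=\statevec$ (full observation). A state is thus a pair $(\mathbf{x},t)$, where $t$ records how many layers have been applied so far, and the network input is encoded as the initial state $(\mathbf{x},0)$. Note that only a finite subcollection of $\agents$ is needed, one agent per layer.

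Next I would define agents $\agent^1,\dots,\agent^n$. The action of $\agent^i$ acts on the state by
\[
\agentact^i(\mathbf{x},t)=\begin{cases}(\sigma(W_i\mathbf{x}+\mathbf{b}_i),\,i) & t=i-1,\\ (\mathbf{x},t) & \text{otherwise,}\end{cases}
\]
i.e. when it is agent $i$'s turn it applies the affine-plus-ReLU map $f_i$ and advances the counter, and otherwise it is the identity. Since $\agents$ and the action space $\actions$ are left abstract in Def~\ref{def:deep}, these deterministic affine-then-ReLU transforms are admissible actions, so the nonlinearity $\sigma$ causes no difficulty. The bid is gated on the same condition: $\agentbid^i(\mathbf{x},t)=1$ if $t=i-1$ and $\agentbid^i(\mathbf{x},t)=0$ otherwise.

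Then I would verify that the \textsc{Market} forward pass selects exactly the intended agent at each step. Initialising every wealth $\wealth[\agent^i]$ to a value $\ge n$ ensures the cap $\min(\agentbid^i(\obs),\wealth[\agent^i])$ never binds over the $n$ steps, even as bids are paid out. At a state with counter $t=i-1$, agent $\agent^i$ bids $1$ while all others bid $0$, so $\arg\max_\agent \bids[\agent]=\agent^i$ and the winning action is $f_i$, carrying the state from $(\,\cdot\,,i-1)$ to $(\,\cdot\,,i)$. A straightforward induction on $i$ then shows that starting from $(\mathbf{x},0)$ the $n$-step run of \textsc{Capitalism} produces $(f_n\circ\dots\circ f_1(\mathbf{x}),n)=(f(\mathbf{x}),n)$; reading off the activation component yields $f(\mathbf{x})$, as required.

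The main obstacle is purely one of bookkeeping: the \textsc{Capitalism} loop continually mutates wealths and reassigns ownership, and one must ensure these dynamics never disturb the intended firing order. Because the theorem concerns only the forward pass — not any equilibrium or training property — this is resolved by choosing bounded bids ($\le 1$) together with large initial wealth, so that over the $n$ transactions no agent's capped bid ever drops below its on-turn value and the sequence $\agent^1,\dots,\agent^n$ wins in order regardless of the reward bookkeeping (which we may take to be $\reward\equiv 0$). One mild design choice worth flagging is how to handle ties once the counter reaches $n$, where all bids equal $0$; since the output has already been produced this is immaterial, but it can be made clean by adding an absorbing ``do-nothing'' agent or by terminating the loop at $t=n$.
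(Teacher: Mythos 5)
Your construction is correct and follows essentially the same route as the paper's: one agent per layer, with the auction firing them in sequence so that $n$ market transactions compute $f_n\circ\dots\circ f_1$, verified by induction on the step count. The only divergence is bookkeeping --- the paper tracks progress through the network by which summand of $\statevecs=\left[\bigoplus_{i}\statevecs_i\right]\oplus\nnY$ the state currently occupies and silences off-turn agents via zero wealth endowments and observation-dependent action sets, whereas you use an explicit phase counter with gated bids; both devices serve the same purpose, and your version is if anything more explicit about why the wealth caps, payment flows and tie-breaking never disturb the intended firing order.
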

\begin{proof}
  The construction is straightforward. Define the state space $\statevecs:=\left[\bigoplus_{1\le i\le n}\statevecs_i\right]\oplus\nnY$ (with each $\statevecs_i:=\Reals^{m_i}$), with $\obs:\statevecs\to\obss$ discarding only the last component $\nnY$ which represents the true label which is unchanged under all actions. Each $\actions_{\obs}=\{(W_i,\mathbf{b}_i):W_i\in\Reals^{m_i\times m_{i-1}}, \mathbf{b}_i\in\Reals^{m_i}\}$ if $\obs\in\statevecs_{i-1}$ and empty if no such $i$ exists, and an action $\action=(W_i,\mathbf{i})$ acts on $\statevec\in\statevecs_{i-i}$ as $\action(\statevec)=\sigma(W_i\statevec+\mathbf{b}_i)$. The reward $\reward(\statevec,\action,\statevec')=-\ell(\statevec',\statevec'_{\nnY})$ for some loss function $\ell$ if $\statevec'\in\statevecs_n$ and $0$ otherwise. Finally, let $\agents$ consist of all constant maps to $\actions_\obs$ and endow non-zero wealth to only those agents whose actions' parameters are the same as some $f_i$.
\end{proof}

While the market model, i.e. the forward pass, in Theorem~\ref{thm:nn} is the same as the neural network, the training mechanism is \Call{Capitalism}{} (as defined in Algorithm~\ref{alg:deep}) rather than backpropagation. Detailed below are some strengths of this we anticipate:

\textbf{Search and dynamic scale.} \emph{Reasoning} is widely touted as a key limitation of current-day LLMs \cite{huangReasoningLargeLanguage2023,mclauAISearchBitterer2024}. A view held by some researchers including Yann LeCun \cite{yannlecunMachinesThatCan2023}\todo{is this ok to do?}, is that this is due to the fact that ``[neural networks] produce their answers with a constant number of computational steps between input and output'', independent of the complexity required by the problem. Some proposed architectures that avoid this limitation include dynamic neural networks \cite{hanDynamicNeuralNetworks2021}, adaptive computation time \cite{gravesAdaptiveComputationTime2017} as well as chain-of-thought based methods such as \texttt{o1} \cite{openaiLearningReasonLLMs2024}. Markets provide a principled alternative, as here the structure of the computational graph is itself learned, and different agents and structures may be active for different inputs.

\textbf{Complete feedback.} Informally speaking, markets allow \emph{any} aspect of the system to be optimized. Formal results are needed to make this statement precise, but intuitively: any aspect of a learner, such as any hyperparameter, or meta-learning, can be changed by adding a trader to the market who will profit if his changes are beneficial and the incentives are correctly designed. This is suggestive of the notion of ``complete feedback'' in AI alignment research, which refers to the property that ``the trainer can enact \emph{any} modification they'd like to make to the system'' \cite{demskiCompleteFeedback2024}, and is viewed as a desirable characteristic of an AI system for alignment. %

\section{Practicality and future work}\todo{is a merged conclusion+last section like this fine?}

We have presented two general frameworks for market-based RL agents, and illustrated that they may be seen to generalize neural networks in a supervised learning setting, albeit with a more flexible training mechanism that holds promise to address the limitations of current-day AIs with respect to reasoning and alignment properties.\todo{too overconfident?}

Despite these theoretical strengths, our algorithm as described faces practical challenges to implement in real-world machine learning tasks: blindly enumerating large classes of even simple agents is inefficient (compared to backpropagation, where the search is guided by gradients), and we have to store many more agents in memory than the ``size'' of the network (the exact number depending on the rule we use to prune low-wealth agents). Some potentially promising approaches include:
\begin{itemize}
\item ``integrated'' models which perform backpropagation by default but intelligently resort to markets when it expects changing the network structure to be worthwhile
\item having each agent simultaneously learn its parameters via backpropagation
\item decentralized set-ups, perhaps using frameworks such as BitTensor \cite{raoBitTensorPeertoPeerIntelligence2021}\todo{is it ok to mention bittensor?}, allowing traders to be shared across machine learning applications.
\end{itemize}

\textbf{Markets of LLMs.} A more immediately feasible application is to develop \emph{markets comprised of LLMs}, i.e. where $\agents$ is a collection of LLM agents. For instance, one may let $\statevecs=\obss$ be a message space, and let actions act on $\statevec$ by appending some ``chain-of-thought item'' to the current message. The final reward is determined by human feedback, and intermediate rewards by bids. Such a market would function as a ``reasoning model'' analogous to \texttt{o1}.\todo{should I write this formally as a Definition?}

The extension to a wide market is also immediate: agents may bid for the right to read only a portion of the message space\footnote{As for how to enable the agent to ``inspect'' the message to make an informed bid without it stealing the entire message, \cite{rahamanLanguageModelsCan2024} is relevant: the agent can subcontract another LLM to inspect the message and place the bid, then have its context deleted.} -- this allows for more precise credit assignment to contributions by different agents, and may be understood as to \emph{trees-of-thought} \cite{yaoTreeThoughtsDeliberate2024} what \texttt{o1} is to \emph{chain-of-thought}.

\textbf{Theoretical work.} The most pressing need at present is for \emph{precise theoretical results} on the effectiveness of market-based algorithms. An immediate research agenda includes the following:

\begin{itemize}
  \item Determining \textbf{convergence and optimality conditions} of market algorithms; in particular, generalizing the ``coverage'' results of BRIA \cite{oesterheldTheoryBoundedInductive2023} and logical induction \cite{garrabrantLogicalInduction2020}, i.e. demonstrating that the market will give a fair chance to the best policy, conditional on some suitable wealth endowments. %
  \item A \textbf{Learning Theory} perspective on markets and the wealth update mechanism. In particular, (real-world) markets appear to have many useful features from an alignment standpoint, such as their inherent capacity for online learning and generalization even from imperfect reward signals.%
  \item A thorough translation of \textbf{economic terminology} into our model -- especially concepts like perfect competition, economies of scale, growth and welfare.%
\end{itemize}

Finally, to accelerate empirical work with market-based algorithms, we plan to release a Python library for efficiently creating and applying market-based algorithms.\todo{not sure how to conclude}

\bibliographystyle{ACM-Reference-Format}
\balance
\bibliography{refs}

\end{document}